\newtheorem{theorem}{Theorem}[section]
\newtheorem{proposition}[theorem]{Proposition}
\theoremstyle{remark}
\newtheorem*{remark}{Remark}
\begin{document}

\title{GraphAIR: Graph Representation Learning with Neighborhood Aggregation and Interaction}

\author{Fenyu Hu, Yanqiao Zhu, Shu Wu, Weiran Huang, Liang Wang, and Tieniu Tan%
\IEEEcompsocitemizethanks{\IEEEcompsocthanksitem F. Hu, Y. Zhu, S. Wu, L. Wang, and T. Tan are with Center for Research on Intelligent Perception and Computing, Institute of Automation, Chinese Academy of Sciences and also with School of Artificial Intelligence, University of Chinese Academy of Sciences.\protect\\
E-mail: \{fenyu.hu, yanqiao.zhu\}@cripac.ia.ac.cn, \{shu.wu, wangliang, tnt\}@nlpr.ia.ac.cn
\IEEEcompsocthanksitem W. Huang is with The Chinese University of Hong Kong.\protect\\
E-mail: huangweiran1998@outlook.com}
\thanks{F. Hu and Y. Zhu contributed equally to this research. The work is done during W. Huang's internship at CRIPAC, CASIA.}}

\markboth{Pattern Recognition}%
{F. Hu and Y. Zhu \MakeLowercase{\textit{et al.}}: GraphAIR: Graph Representation Learning with Neighborhood Aggregation and Interaction}

\IEEEtitleabstractindextext{%
\begin{abstract}
Graph representation learning is of paramount importance for a variety of graph analytical tasks, ranging from node classification to community detection. Recently, graph convolutional networks (GCNs) have been successfully applied for graph representation learning. These GCNs generate node representation by aggregating features from the neighborhoods, which follows the ``neighborhood aggregation'' scheme. In spite of having achieved promising performance on various tasks, existing GCN-based models have difficulty in well capturing complicated non-linearity of graph data. In this paper, we first theoretically prove that coefficients of the neighborhood interacting terms are relatively small in current models, which explains why GCNs barely outperforms linear models. Then, in order to better capture the complicated non-linearity of graph data, we present a novel GraphAIR framework which models the neighborhood interaction in addition to neighborhood aggregation. Comprehensive experiments conducted on benchmark tasks including node classification and link prediction using public datasets demonstrate the effectiveness of the proposed method.
\end{abstract}

\begin{IEEEkeywords}
Graph representation learning, neighborhood aggregation, graph neural networks, neighborhood interaction, node classification, link prediction
\end{IEEEkeywords}}

\maketitle

\IEEEdisplaynontitleabstractindextext
\IEEEpeerreviewmaketitle

\IEEEraisesectionheading{\section{Introduction}}
\IEEEPARstart{G}{raph} representation learning aims to transform nodes on the graph into low-dimensional dense vectors whilst still preserving the attribute features of nodes and structure features of graphs.
In recent years, there has been a surge of research interest in utilizing neural networks to handle graph-structured data. Among them, graph convolutional networks (GCNs) have been shown effective in graph representation learning. They can model complex attribute features and structure features of graphs and achieve the state-of-the-art performance on various tasks. The core of graph convolution is that nodes learn their representations by aggregating features from their neighbors, i.e. the ``neighborhood aggregation'' scheme. Recently, some graph convolutional models, which primarily differ in the neighborhood aggregation strategies, have been proposed \cite{Kipf:2016tc,Velickovic:2018we,Zhuang:2018ks,Xu:2018vn}. For example, GCN \cite{Kipf:2016tc} can be seen as the approximation of aggregation on the first-order neighbors; GraphSAGE \cite{Hamilton:2017tp} designs several aggregators for inductive learning, where unlabeled data does not appear in the training process; GAT \cite{Velickovic:2018we} introduces the attention mechanism to model influence of neighbors with learnable parameters.

From a historical perspective, machine learning research has gone through a long process of development, with one clear trend from simple and linear models to complex and non-linear models. For example, limitations of the linear support vector machine (SVM) motivated the development of non-linear and more expressive kernel-based SVM classifiers \cite{Boser:1992uo}.
Besides, similar trends can be observed in the realm of image processing as real-world data distribution is usually rather complex. For example, simple and linear image filters \cite{Harris:1988tk} are gradually superseded by deep non-linear convolutional neural networks (CNNs) \cite{LeCun:1989wh}.
Driven by the significance of modeling complex and non-linear distributions of data, a question arises: \textit{are existing GCNs capable enough to model the complex and non-linear distributions of graphs?} We find that most previous graph convolutional models (e.g., GCN and GAT) are usually shallow with only one or two non-linear activation function layers, which may restrict the model from well capturing the complicated non-linearity of graph data.

In this paper, we first theoretically prove that the effect of non-linear activation functions in GCNs is to introduce the interaction terms of neighborhood features. We then show that coefficients of the neighborhood interacting terms are relatively small in current GCN-based models. To this end, we present a general framework named GraphAIR (\underline{A}ggregation and \underline{I}nte\underline{R}action) \footnote{Code is publicly available at \url{https://github.com/CRIPAC-DIG/GraphAIR}.}. The key idea behind our approach is to \textit{explicitly model the neighborhood interaction in addition to neighborhood aggregation, which can better capture the complex and non-linear node features}.
WAs illustrated in Figure \ref{fig:node-interaction}, GraphAIR consists of two parts, i.e. \emph{aggregation} and \emph{interaction}. The aggregation module constructs node representations by combining features from neighborhoods; the interaction module explicitly models neighborhood interactions through multiplication.

\begin{figure}
	\centering
		\includegraphics[width=0.8\linewidth]{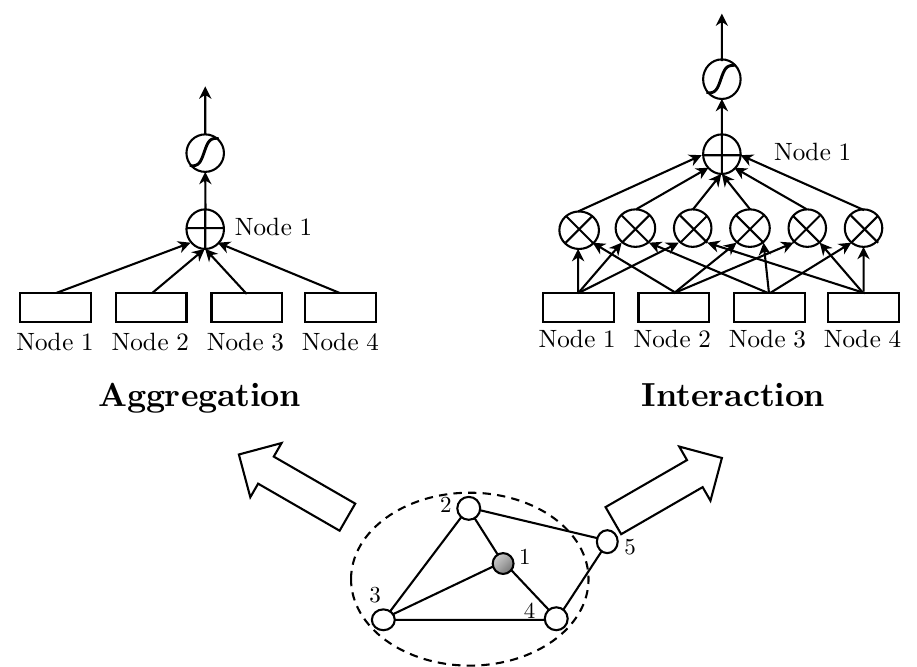}
	\caption{
	A graphical illustration of the proposed GraphAIR model. The aggregation module sums up the neighborhood features; the interaction module models the pair-wise feature interaction among the neighborhoods.}
	\label{fig:node-interaction}
\end{figure}

Nevertheless, several challenges exist in modeling the neighborhood interaction. Firstly, different nodes may have various numbers of adjacent neighbors, leading to different numbers of interaction pairs among neighbors. Thereby, defining a universal neighborhood interaction operator which is able to handle arbitrary numbers of interaction pairs is challenging. Secondly, it is preferable to propose a general plug-and-play interaction module instead of designing model-specific neighborhood interaction strategies for different GCN-based models.

To tackle the aforementioned challenges, we derive that the neighborhood interaction can be easily obtained through the multiplication of node embeddings.
As a result, both of the neighborhood aggregation module and the neighborhood interaction module can be implemented by most existing graph convolutional layers.

In a nutshell, the main contributions of this paper are three-fold.
Firstly, to best of our knowledge, it is the first work to analyze existing GCN-based methods from the perspective of capturing non-linearity of graph data. We then propose to explicitly model neighborhood interaction for capturing non-linearity of graph-structured data.
Secondly, the proposed GraphAIR can easily integrate off-the-shelf graph convolutional models, which shows favorable generality. Thirdly, extensive experiments conducted on benchmark tasks of node classification and link prediction show that GraphAIR achieves the state-of-the-art performance.

\section{Related Work}

Graph representation learning, which aims to learn low-dimensional representations that preserve useful topology and attributive information, plays an important role in many tasks, such as node classification \cite{Kipf:2016tc,Hu:2019vq,Zhu:2020wh,Zhu:2020vf}, graph classification \cite{Zhang:2020ed,Bai:2020wp}, clustering \cite{Yin:2018ux}, time-series analysis \cite{Bai:2020dn}, and recommendation \cite{Wu:2019ke,Yu:2020dn,Zhang:2020er,Cui:2019dr}.
There have been a lot of attempts in recent literature to employ neural networks for graph representation learning. Among them, graph convolutional neural networks (GCNs) receive a lot of research interests. GCN-based models generally follow the neighborhood aggregation scheme. To be specific, the model passes the input signals from neighborhoods through filters to aggregate information. Many approaches design different strategies to aggregate information from nodes' neighborhood. According to different strategies, these models can be roughly grouped into two categories, i.e. spectral-based approaches and spatial-based approaches.

One the one hand, spectral methods depend on the Laplacian eigenbasis to define parameterized filters. The first work \cite{Bruna:2014vg} introduce convolutional operations in the Fourier domain by computing the eigendecomposition of the graph Laplacian, which results in potentially heavy computational burden. Following its work, \cite{Defferrard:2016vo} propose to approximate filters using Chebyshev expansion of the graph Laplacian. Then, graph convolutional neural networks (GCNs) \cite{Kipf:2016tc} have been widely applied for graph representation learning. The core of GCNs is the neighborhood aggregation scheme which generates node embedding by combining information from neighborhoods. Since GCN only captures local information, DGCN \cite{Zhuang:2018ks} then proposes to construct an information matrix to encode global consistency.

On the other hand, the spatial approaches directly operate on spatially close neighbors. To enable parameter sharing of filters across neighbors of different sizes, \cite{Duvenaud:2015ww} first propose to learn weight matrices for different node degrees. MoNet \cite{Monti:2017ih} proposes a spatial-domain model to provide a unified convolutional network on graphs. To compute node representations in an inductive manner, GraphSAGE \cite{Hamilton:2017tp} samples fixed-size neighborhoods of nodes and performs aggregation over them. Similarly, \cite{Gao:2018dj} select a fixed number of neighbors and enable the use of conventional convolutional operations on Euclidean spaces. Recently, GAT \cite{Velickovic:2018we} introduces attention mechanisms to graph neural networks, which computes hidden representations by attending over neighbors with a self-attention strategy.
BASGCN \cite{Bai:2020fn} transforms the arbitrary-sized graphs into fixed-sized backtrackless aligned grid structures, and performs a novel backtrackless spatial graph convolutional operation on the grid structures to extract multi-scale local-level vertex features.

Recently, some methods are proposed to focus on linearity and non-linearity of graphs respectively. On the one hand, simplified graph convolutional networks (SGCs) \cite{Wu:2019vz} try to reduce the complexity and eliminate redundant computation of GCN by successively removing non-linear activation functions. SGC makes assumptions that non-linearity between GCN layers is not critical to the model performance and the majority of the benefit is brought by the neighborhood aggregation scheme. While being more computationally efficient, SGC achieves comparable empirical performance to vanilla GCN.

There are other methods arguing that modeling non-linear distributions of node features can bring improvements. For example, GraphSAGE-LSTM \cite{Hamilton:2017tp} employs the long-short-term memory (LSTM) module to learn the complex relationships between the nodes. Empirically, GraphSAGE-LSTM outperforms other aggregation functions such as GraphSAGE-mean and GraphSAGE-GCN. Graph isomorphic networks (GIN) \cite{Xu:2019ty} apply multilayer perceptrons (MLPs) in each graph convolutional layer, which is able to model complex non-linearity of graphs. Although theoretically it is well known that MLPs are universal approximators \cite{Hornik:1991te}, there is no formal theorem giving instructions on how to asymptotically approximate the desired function (\cite{Patterson:1996ug}, p.~182; \cite{Fausett:1994tq}, p.~328). Different from GraphSAGE-LSTM and GIN, to best of our knowledge, our work is the first to point out that most existing GCNs may not well capture non-linearity of graph data and we demonstrate the effectiveness of explicitly modeling non-linearity of graphs.

\section{Background and Preliminaries}

In this section, we firstly introduce the notations used throughout the paper and then summarize some of the most common GCN models. Last, we briefly introduce residual learning which we employ in our model.

\subsection{Notations}

Let \(\mathcal{G} = (\bm{A}, \bm{X})\) be an undirected graph with \(n\) nodes, where \(\bm{A} \in \mathbb{R}^{n \times n}\) is the adjacency matrix, \(\bm{X} \in \mathbb{R}^{n \times d}\) is the feature attribute matrix, and \(\bm{x}_i \in \mathbb{R}^{1 \times d}\) denotes the attribute of node \(i\). Please kindly note that in this paper we primarily focus on undirected graphs, but our proposed method can be easily generalized to work with weighted or directed graphs. The mathematical notations used throughout this paper are summarized in Table \ref{tab:notations}.

\begin{table*}
	\centering
	\caption{Notations used throughout this paper.}
	\begin{tabular}{cl}
	\toprule
	Notation & Description \\
	\midrule
	$\bm{A}, \widetilde{\bm{A}}$ & adjacency matrix, adjacency matrix with self-loops \\
	$n$ & the number of nodes \\
	$d$ & the dimension of the input feature \\
	$\mathcal{N}_i$ & the set of first-order neighbors of node $i$ including itself \\
	$\bm{n}_i$ & transformed embeddings before aggregation \\
	$\bm{W}^{(k)}$ & weight matrix in the $k$-th graph convolutional layer \\
	$e_{ij}$ & importance weight of node $j$'s feature to node $i$ \\
	$\beta_{ij}$ & interaction weight between node $i$ and node $j$ \\
	$\bm{h}_i$ & embedding of node $i$ resulting from graph convolution \\
	$\bm{h}_i^\text{agg}$ & embedding of node $i$ resulting from neighborhood aggregation \\
	$\bm{h}_i^\text{ir}$ & embedding of node $i$ resulting from neighborhood interaction \\
	$\bm{h}_i^\text{air}$ & embedding of node $i$ combining neighborhood aggregation and interaction \\
	$\bm{z}_i$ & embedding of node $i$ resulting from the output layer \\
	\bottomrule
	\end{tabular}
	\label{tab:notations}
\end{table*}

\subsection{Aggregators in Graph Convolutional Models}
\label{sec:aggregators}

As mentioned above, existing GCNs mainly differ in the neighborhood aggregation functions. The representative graph convolutional model such as GCN \cite{Kipf:2016tc} and GAT \cite{Velickovic:2018we} can be formulated as:
\begin{align}
	\bm{n}_i^{(k)} &= \bm{h}_i^{(k)} \bm{W}^{(k)}, \label{eq:weighted-representation}\\
	\bm{h}_i^{(k + 1)} &= \sigma\left( \sum_{j\in\mathcal{N}_i} e_{ij} \bm{n}_j^{(k)} \right), \label{eq:summarization-aggregator}
\end{align}
where \(\bm{h}_i^{(k)} \in \mathbb{R}^{d_k}\) is the embedding of the \(i\)\textsuperscript{th} node resulting from the \(k\)\textsuperscript{th} graph convolutional layer, \(\bm{W}^{(k)} \in \mathbb{R}^{d_k \times d_{k + 1}}\) is a learnable weight matrix, \(e_{ij}\) is a scalar which indicates the importance of node \(j\)'s features to node \(i\), and \(\sum_{j} e_{ij} = 1\). \(\sigma(\cdot)\) is the activation function, e.g., \(\operatorname{ReLU}(\cdot) = \max(0, \cdot)\) and \(\mathcal{N}_i\) is the set containing the first-order neighbors of node \(i\) as well as node \(i\) itself. To obtain the node embedding, a linear transformation is first conducted to project features to a new feature subspace. Then, the node embedding can be updated by weighted summation over the projected features of its neighbors, followed by a non-linear activation function.

Different models adopt different strategies to design the aggregators. For GCN, it uses a predefined weight matrix \(\hat{\bm{A}} = \tilde{\bm{D}}^{-\frac{1}{2}}\tilde{\bm{A}}\tilde{\bm{D}}^{-\frac{1}{2}}\) for summarization, where \(\tilde{\bm{A}} = \bm{A} + \bm{I}\) is the adjacency matrix with self-loops and \(\tilde{D}_{ii} = \sum_j \tilde{A}_{ij}\). Here, entry \(a_{ij}\) of \(\hat{\bm{A}}\) is a predefined weight factor for weighted summarization over neighborhoods, i.e. \(e_{ij}\) in Eq.(\ref{eq:summarization-aggregator}). Unlike GCN, GAT makes use of the attention mechanism to explicitly learn \(e_{ij}\) as follows:
\begin{equation}
	\begin{aligned}
		\alpha_{ij} &= g(\bm{n}_i, \bm{n}_j), \\
		e_{ij} &= \operatorname{softmax}(\alpha_{ij}) = \frac{\exp{(\alpha_{ij})}}{\sum_{k \in \mathcal{N}_i} \exp{(\alpha_{ik})}},
	\end{aligned}
\end{equation}
where \(g: \mathbb{R}^{d} \times \mathbb{R}^{d} \rightarrow \mathbb{R}\) is a self-attention function, which can be simply implemented as a feed-forward neural network.

\begin{table*}
	\centering
	\caption{Activation functions used in representation graph convolutional networks.}
	\label{tab:activation-functions}
	\begin{tabular}{ccc}
		\toprule
		Model & Propagation rule & Activation function \\
		\midrule
		GCN \cite{Kipf:2016tc} & $\bm{h}_i' = \sigma(\bm{W}\frac{1}{\operatorname{deg}(i)} \sum_{j\in\mathcal{N}_i} \bm{h}_j)$ & ReLU \\
		GraphSAGE-GCN \cite{Hamilton:2017tp} & $\bm{h}_i' = \sigma(\bm{W} \operatorname{avg}_{j \in \mathcal{N}_\text{sample}\cap\{i\}}(\bm{h}_j))$ & ReLU \\
		GAT \cite{Velickovic:2018we} & $\bm{h}_i' = \sigma(\sum_{j \in \mathcal{N}_i}e_{ij}\bm{h}_j)$ & ELU \\
		SGC \cite{Wu:2019vz} & $\bm{h}_i' = \bm{W}\frac{1}{\operatorname{deg}(i)} \sum_{j\in\mathcal{N}_i} \bm{h}_j$ & N.A. \\
		\bottomrule
	\end{tabular}
\end{table*}

\textbf{The implicit and insufficient neighborhood interaction involved in existing GCNs.}
It is seen from Eq. (\ref{eq:summarization-aggregator}) that without the activation function, the node representation would depend linearly on the neighborhood features. Then, although mainstream models adopt non-linear activation functions, which is able to introduce the neighborhood interaction implicitly as a side effect, they still face challenges in learning the neighborhood interaction sufficiently. We take the sigmoid function \(s(t) = \frac{1}{1+e^{-t}}\) as an example and approximate it with Taylor polynomials. Note that mainstream GCN-based models (as summarized in Table \ref{tab:activation-functions}) use piecewise non-saturating activation functions, such as \(\operatorname{ReLU}\) and \(\operatorname{LeakyReLU}(x)=\max(0.01x, x)\). These functions suppress negative values yet are still linear for positive values. Here we analyze the sigmoid function as it brings more non-linearity. Since the elements in the node embeddings are small\footnote{Most existing graph convolutional models, including GCN, GraphSAGE, and GAT normalize the input and initialize the weights using Glorot initialization \cite{Glorot:2010uc}.}, the high-order interacting terms among the neighborhoods are small as well. Then, we just analyze the coefficients of high-order interacting terms, which is claimed in the following proposition.
\begin{proposition}
	When applying the sigmoid function \(s(t)\) on the result of the linear combination as formulated in Eq. (\ref{eq:summarization-aggregator}), the equivalent coefficient of high-order interacting terms of the neighborhood embeddings is at most \(\frac{1}{48}\).
	\label{prop:implicit-interaction}
\end{proposition}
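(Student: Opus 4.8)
The plan is to expand the sigmoid as a power series about the origin and read off the scalar that multiplies the nonlinear monomials in the neighborhood embeddings. Fix one coordinate of the output of Eq.~(\ref{eq:summarization-aggregator}) and write $t=\sum_{j\in\mathcal{N}_i}e_{ij}\,x_j$, where $x_j$ abbreviates that coordinate of $\bm{n}_j^{(k)}$; then the corresponding coordinate of $\bm{h}_i^{(k+1)}$ is $s(t)$. Since the embedding entries are small (normalized inputs, Glorot-initialized weights) and $\sum_j e_{ij}=1$, the value $t$ stays well inside the disk of convergence of $s$, so the expansion below is legitimate.

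First I would compute the low-order Taylor coefficients of $s$ at $0$ from the identity $s'=s(1-s)$: this gives $s(0)=\tfrac12$, $s'(0)=\tfrac14$, $s''(0)=0$, and $s'''(0)=-\tfrac18$, hence
\[
  s(t)=\frac12+\frac14\,t-\frac{1}{48}\,t^3+O(t^5).
\]
Two features of this expansion drive the argument: the quadratic term is absent (in fact $s(t)-\tfrac12$ is odd, because $s(-t)=1-s(t)$), so the lowest-degree interaction among neighborhood features appears only at degree three; and $t^3=\bigl(\sum_{j}e_{ij}x_j\bigr)^3=\sum_{a,b,c}e_{ia}e_{ib}e_{ic}\,x_ax_bx_c$, so each genuine interaction monomial $x_ax_bx_c$ picks up the equivalent coefficient $-\tfrac{1}{48}e_{ia}e_{ib}e_{ic}$, whose magnitude is at most $\tfrac{1}{48}$ because $0\le e_{ij}\le 1$. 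The linear term $\tfrac14 t$, by contrast, is pure aggregation and contributes no interaction.

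It remains to control the terms of order five and higher so that $\tfrac{1}{48}$ is genuinely the largest coefficient. The clean way is Taylor's theorem with Lagrange remainder: since $s''(0)=0$ we have $s(t)=\tfrac12+\tfrac14 t+\tfrac{s'''(\xi)}{6}t^3$ for some $\xi$ between $0$ and $t$, and a short calculation writing $u=\tanh^2(t/2)$ gives $s'''(t)=\tfrac18(1-u)(3u-1)\in[-\tfrac18,\tfrac1{24}]$, so $|s'''|\le\tfrac18$ everywhere and the whole nonlinear part of $s(t)$ equals $c\,t^3$ with $|c|\le\tfrac{1}{48}$. Substituting $t=\sum_j e_{ij}x_j$ as above, every high-order interacting monomial inherits a coefficient of magnitude at most $\tfrac{1}{48}\,e_{ia}e_{ib}e_{ic}\le\tfrac{1}{48}$, which is the claim.

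The part needing the most care is bookkeeping rather than computation: one must state precisely what counts as a ``high-order interacting term'' and that its ``equivalent coefficient'' is the product of the Taylor/remainder coefficient with the corresponding $e_{ij}$ factors (all in $[0,1]$), and one must invoke the smallness-of-embeddings hypothesis to justify the expansion. The only genuine, if minor, lemma is the uniform bound $\max_t|s'''(t)|=\tfrac18$; without it one can instead observe that the odd Taylor coefficients $\tfrac14,\tfrac1{48},\tfrac1{480},\dots$ decay, so all terms beyond the cubic contribute coefficients below $\tfrac1{480}<\tfrac1{48}$.
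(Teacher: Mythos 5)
Your proof is correct and follows essentially the same route as the paper's: a Taylor expansion of the sigmoid at $0$ with the Lagrange remainder taken at order $P=2$, the observation that the quadratic coefficient vanishes, and the bound $\lvert s'''\rvert/3!\le\tfrac{1}{48}$ on the cubic coefficient, combined with $0\le e_{ij}\le 1$. You additionally supply the uniform bound $\max_t\lvert s'''(t)\rvert=\tfrac18$ explicitly, a detail the paper defers to its supplementary appendix, and your multinomial bookkeeping of the interaction coefficients $e_{ia}e_{ib}e_{ic}$ is if anything cleaner than the paper's.
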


\begin{proof}
	The sigmoid function \(s(t)\) can be approximated as Taylor polynomials at \(t_0 = 0\):
	\begin{equation}
		s(t) \approx \sum_{p=0}^P \frac{s^{(p)}(0)}{p!} t^{p} = \frac{1}{2} + \frac{1}{4}t - \frac{1}{48}t^3 + \dots + \frac{s^{(P)}(0)}{P!} t^P,
	\end{equation}
	where \(P\) is the degree of the polynomial. The approximation error can be bounded using the Lagrange form of the remainder:
	\begin{equation}
		\begin{aligned}
			|R_p(t)| \leq & \frac{|t|^{P + 1}}{(P + 1)!}M_p, \quad \\
			& \text{where } |s^{P+1}(\theta)| \leq M_p, \quad \theta \in (-t, t).
		\end{aligned}
	\end{equation}
	Since the coefficient of the quadratic term is zero, we set \(P = 2\) and analyze the contribution of high-order interacting terms. Then, replacing \(t\) with \(\sum_{j \in \mathcal{N}_i} e_{ij}\bm{n}_j^{(k)}\), Eq. (\ref{eq:summarization-aggregator}) can be written as follows:
	\begin{equation}
		\begin{aligned}
		& \bm{h}_i^{(k + 1)} = \frac{1}{2} + \frac{1}{4} \left( \sum_{j \in \mathcal{N}_i} e_{ij}\bm{n}_i^{(k)} \right) \\
		& + M\left(\sum_{j \in \mathcal{N}_i}\sum_{k \in \mathcal{N}_i}\sum_{l \in \mathcal{N}_i} e_{ij}e_{ik}e_{il}\bm{n}_j^{(k)} \cdot \bm{n}_k^{(k)} \cdot \bm{n}_l^{(k)} \right),
		\end{aligned}
	\end{equation}
	where \(M\) is the bound of the reminder. To analyze its maximum value, we first get the third derivative of the sigmoid function:
	\begin{equation}
		\begin{aligned}
			s^{(3)}(\theta) & = \frac{\mathrm{d}^{3}}{\mathrm{d}\theta^{3}}\left(\frac{1}{1+e^{-\theta}}\right) \\
			& =\frac{e^{-\theta}}{\left(1+e^{-\theta}\right)^{2}}-\frac{6e^{-2\theta}}{\left(1+e^{-\theta}\right)^{3}}+\frac{6e^{-3\theta}}{\left(1+e^{-\theta}\right)^{4}}.
		\end{aligned}
	\end{equation}
	Then, making \(s^{(4)}(\theta) = 0\), we can calculate its roots:
	\begin{equation}
		\begin{aligned}
			\theta_1 & = 0,\\
			\theta_2 & = \log\left(5 + 2\sqrt{6}\right), \\
			\theta_3 & = \log\left(5 - 2\sqrt{6}\right).
		\end{aligned}
	\end{equation}
	Therefore, the corresponding extreme values of \(s^{(3)}(\theta)\) are \(-\frac{1}{8}\), \(\frac{1}{24}\), and \(\frac{1}{24}\). It is obvious to see the maximum absolute value of \(s^{(3)}(\theta)\) is \(\frac{1}{8}\). Therefore,
	\begin{equation}
		M = \frac{|s^{(3)}(\theta)|}{3!} \leq \frac{1}{48},
	\end{equation}
	which concludes the proof.
\end{proof}

\begin{remark}
Proposition \ref{prop:implicit-interaction} states that the effect of non-linear activation functions in GCNs is to introduce the interaction terms of neighborhood features.
The coefficients of the neighborhood interacting terms in current GCN-based models are relatively small, leading to a negligible contribution to node representations.
Considering other kinds of non-linear activation functions, such as ReLU-like piecewise non-saturating activation function, they exhibit less non-linearity than then sigmoid function; therefore, the coefficients of the neighborhood interacting terms in current GCN-based models will be smaller than $\frac{1}{48}$.
As existing GCNs are usually shallow with only one or two non-linear layers to avoid oversmoothing and overfitting \cite{Li:2018wc}, non-linearity of graph data cannot be learned sufficiently.
\end{remark}


Moreover, we also conduct empirical experiments to compare the performance of existing representative GCN-based models by using other activation functions, including sigmoid and tanh. As shown in Table \ref{fig:GCN-activation}, even with highly non-linear activation functions such as tanh and sigmoid, existing GCN-based methods perform even worse than their original models.
We think the performance degradation is due to these saturating non-linear activation functions suffer from vanishing gradients and are much slower than non-saturating  activation function \cite{Krizhevsky:2012wl,Xu:2015uj}. The results in Table \ref{fig:GCN-activation} demonstrates the inefficiency of leveraging activation functions to capture non-linearity of graph-structured data.

\begin{table*}
	\centering
	\caption{Node classification accuracy of representation GCN-based models with different activation functions.}
	\label{fig:GCN-activation}
	\begin{tabular}{ccccc}
		\toprule
		Model (Activation function) & Cora & Citeseer & Pubmed & NELL \\
		\midrule
		GCN (ReLU) & 81.5\% & 70.3\% & 79.0\% & 66.0\% \\
		GCN (sigmoid) & 79.4\% & 51.8\% & 77.3\% & 63.2\% \\
		GCN (tanh) & 80.6\% & 71.2\% & 79.4\% & 66.4\% \\
		GAT (ELU) & 83.0\% & 72.5\% & 79.0\% & -- \\
		GAT (sigmoid) & 36.0\% & 18.5\% & 54.4\% & -- \\
		GAT (tanh) & 82.6\% & 71.3\% & 77.5\% & -- \\
		\bottomrule
	\end{tabular}
\end{table*}

\subsection{Residual Learning}

In this paper, we employ residual learning to combine the neighborhood aggregation and interaction. Residual learning \cite{He:2016ib} is a widely-used building block for deep learning. Suppose \(h(\bm{x})\) is the true and desired mapping and \(\bm{x}\) is the suboptimal representation which serves as the input feature to the residual module. Residual learning can be formulated as:
\begin{equation}
	h(\bm{x}) = f(\bm{x}) + \bm{x},
\end{equation}
where \(f(\cdot)\) is a residual function. Practically, we can apply a few non-linear layers to obtain the suboptimal representation \(\bm{x}\) and some other non-linear layers to implement the residual function \(f\). The essence of residual learning lies in the skip connection, through which the earlier representations are able to flow to later layers. The skip connection enables more direct reuse of the suboptimal representation and improves the information flow during forward and backward propagation \cite{He:2016ib}, which makes the network easier to be optimized. Many approaches \cite{He:2016ib,Li:2018ta} have shown that residual learning helps break away from the local optimum and improving the performance.

\section{The Proposed Method: GraphAIR}

In this section, we firstly formulate the model of neighborhood interaction and then describe how the parameters of GraphAIR model can be learned. Finally, we summarize the overall model architecture and analyze the computational complexity.

\subsection{Modeling the Neighborhood Interaction with Residual Functions}

As discussed in Section \ref{sec:aggregators}, the node representation resulting from the neighborhood aggregation scheme is less likely to well capture complicated non-linearity of graphs because they learn the neighborhood interaction implicitly and inefficiently. In this section, we describe the embedding generation algorithm of GraphAIR, which aims to incorporate the neighborhood interaction into node representations.
To begin with, a natural idea to model the quadratic terms of neighborhood interaction is formulated as:
\begin{equation}
	\bm{h}_i^\text{ir} = \sum_{j\in\mathcal{N}_i} \sum_{k\in\mathcal{N}_i} \beta_{jk} \bm{n}_j \odot \bm{n}_k,
\end{equation}
where \(\bm{h}_i^\text{ir}\) is the neighborhood \underline{i}nte\underline{r}action representation of node \(i\), \(\beta_{jk}\) denotes the coefficient of the quadratic term, and \(\odot\) is the element-wise multiplication operator.
\begin{figure}
	\centering
	\subfloat[Self-interaction module can represent neighborhood interaction.]{
		\includegraphics[scale=0.6]{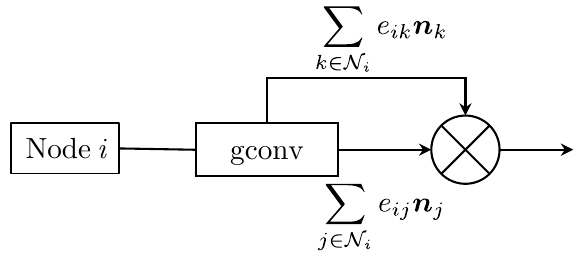}
		\label{fig:self-interaction}
	}\\
	\subfloat[The neighborhood interaction module applies an auxiliary layer for better representation.]{
		\includegraphics[scale=0.6]{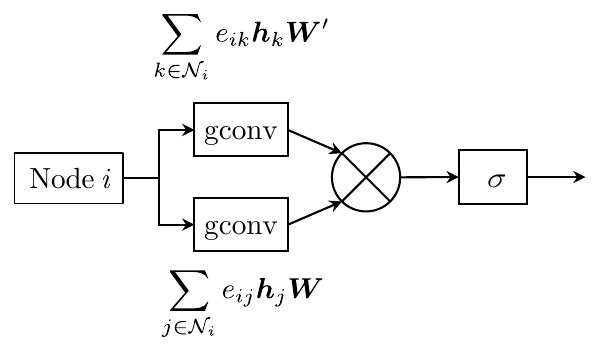}
		\label{fig:neighborhood-interaction}
	}\\
	\subfloat[Overview of node aggregation and interaction in GraphAIR. ``gconv'' block indicates the general graph convolutional layer, which can be instantiated as GCN \cite{Kipf:2016tc}, GAT \cite{Velickovic:2018we}, etc.]{
		\includegraphics[scale=0.6]{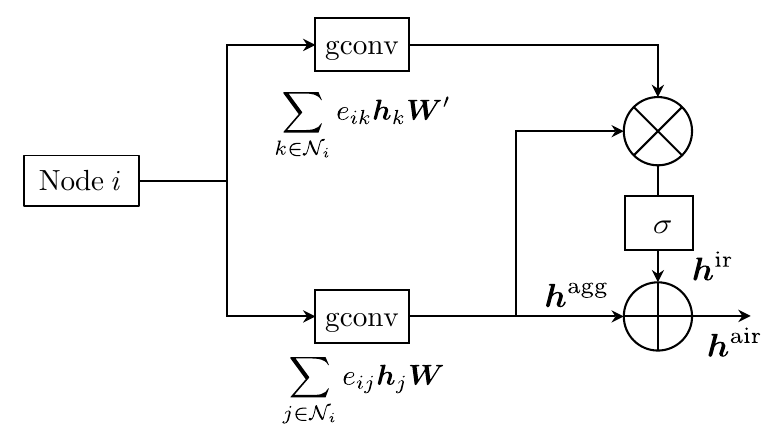}
		\label{fig:node-aggregation-interaction}
	}
	\caption{Detailed illustration of neighborhood aggregation and interaction in GraphAIR.}
	\label{fig:details}
\end{figure}
However, it is infeasible to learn \(\beta_{jk}\) in our case. For each node \(i\), there are \(O(|\mathcal{N}_i|^2)\) coefficients to estimate, which exposes the risk of overfitting. To alleviate this problem, we simply assign \(\beta_{jk}\) as the product of importance weights \(e_{ij}\) and \(e_{ik}\). The simplification is reasonable with the following aspects. For node \(i\), if \(e_{ij}\) and \(e_{ik}\) are large, then the neighbor nodes \(j\) and \(k\) should be considered as important factors for the representation of node \(i\). Compared to other interacting terms, the interaction between node \(j\) and \(k\) are likely to provide more relevant information about node \(i\). Consequently, \(\beta_{jk}\) should be large.
In contrast, if \(e_{ij}\) and \(e_{ik}\) are small, neighbor nodes \(j\) and \(k\) may have a slight impact on node \(i\). Thus, the interacting coefficient should be small as well. Formally, we arrive at:
\begin{equation}
	\begin{aligned}
		\bm{h}_i^\text{ir} & = \left(\sum_{j \in \mathcal{N}_i} e_{ij} \bm{n}_j\right) \odot \left(\sum_{k \in \mathcal{N}_i} e_{ik} \bm{n}_k\right) \\
		& = \left(\sum_{j \in \mathcal{N}_i} e_{ij} \bm{h}_j \bm{W}\right) \odot \left(\sum_{k \in \mathcal{N}_i} e_{ik} \bm{h}_k \bm{W}\right) \\
		& = \bm{h}_i^\text{agg} \odot \bm{h}_i^\text{agg},
	\end{aligned}
	\label{eq:preliminary-interaction}
\end{equation}
where \(\bm{h}_i^\text{agg} = \sum_{j \in \mathcal{N}_i} e_{ij} \bm{h}_j \bm{W}\) denotes the representation resulting from neighborhood aggregation. Figure \ref{fig:self-interaction} illustrates this process. For brevity, we name this operation as self-InteRaction (self-IR)

In order to introduce more non-linearity to our model, we apply non-linear activation function on the two representations resulting from neighborhood aggregation and neighborhood interaction respectively.
Since combining different non-linear layers by skip connection for capturing various information has been widely used in recent work \cite{He:2016ib,Xu:2018vn,Lee:2019ww}, we combine these two representations by using a skip connection as follows.
\begin{equation}
	\bm{h}_i^\text{air} = \sigma(\bm{h}_i^\text{agg}) + \sigma(\bm{h}_i^\text{ir}).
	\label{eq:preliminary-residual}
\end{equation}
It should be noted that concatenation is also widely used and we found that skip connections by adding performs better for combining neighborhood aggregation and interaction. Please refer to Section \ref{sec:combining-aggregation-interaction} for additional experiments.

However, although we adopt a skip connection here, we argue that we still cannot benefit from residual learning, where both of the suboptimal representation and the residual function are implemented by \emph{different non-linear layers}. As formulated in Eqs. (\ref{eq:preliminary-interaction},\ref{eq:preliminary-residual}), the two representations resulting from neighborhood aggregation and interaction are based on the \emph{same} weight matrix \(\bm{W}\), which means the variations of the two representations during the back-propagation process are highly correlated. According to \cite{Bengio:2013bu}, it is important to disentangle the factors of variation to the representations as only a few factors tend to change at a time. Therefore, 
to make use of residual learning which can ease the optimization, we introduce another weight matrix \(\bm{W}^\prime \in \mathbb{R}^{d_k \times d_{k + 1}}\) to disentangle learning the neighborhood interaction from neighborhood aggregation. Formally, instead of Eq. (\ref{eq:preliminary-interaction}), we use the following equation to learn the neighborhood interaction in our model: 
\begin{equation}
	\begin{aligned}
		\bm{h}_i^\text{ir} & = \left(\sum_{j \in \mathcal{N}_i} e_{ij} \bm{h}_j \bm{W}\right) \odot \left(\sum_{k \in \mathcal{N}_i} e_{ik} \bm{h}_k \bm{W}^\prime\right) \\
		& = \bm{h}_i^\text{agg} \odot \bm{\bar{h}}_i^\text{agg},
		\label{eq:node-interaction}
	\end{aligned}
\end{equation}
where the first term \(\bm{h}_i^\text{agg} = \sum_{j \in \mathcal{N}_i} e_{ij} \bm{h}_j \bm{W}\) denotes the representation resulting from neighborhood aggregation and the second term \(\bm{\bar{h}}_i^\text{agg} = \sum_{k \in \mathcal{N}_i} e_{ik} \bm{n}_k \bm{W}^\prime\) provides the other half node representation for multiplication in the interaction process. \(\bm{h}_i^\text{agg}\) is the input representation to the residual module and \(\bm{W}^\prime\) is the learnable weight of the residual function. Note that both terms \(\bm{h}_i^\text{agg}\) and \(\bm{\bar{h}}_i^\text{agg}\) can be implemented by existing graph convolutional layers.
We illustrate Eq. (\ref{eq:node-interaction}) in Figure \ref{fig:neighborhood-interaction}. We also conduct ablation study in Section 5.4 to prove the effectiveness of Eq. (\ref{eq:node-interaction}).

To sum up, the neighborhood aggregation and interaction module shown in Figure \ref{fig:node-interaction} can be combined by skip connections, as shown in Figure \ref{fig:node-aggregation-interaction}.
Thus, the proposed GraphAIR framework is compatible with most existing GCN-based models such as GCN \cite{Kipf:2016tc} and GAT \cite{Velickovic:2018we} and it provides a plug-and-play module for the neighborhood interaction.

%
%
%

\subsection{Learning the Parameters of GraphAIR}

In this section, we introduce how to learn the parameters under the GraphAIR framework. As we aim to propose a general approach for graph representation learning, we can apply different kinds of graph-based loss function, such as the proximity ranking loss in link prediction tasks and the cross-entropy loss in node classification tasks. Without loss of generality, we take the task of node classification as an example.

To compute the probability that each node belongs to a certain class, existing GCN-based models usually employ one additional graph convolutional layer with a softmax classifier for prediction. Then, the output representation \(\bm{z}_i\) is formulated as:
\begin{equation}
	\bm{z}_i = g\left(\bm{h}_i^{(k)}\right) = \operatorname{softmax} \left( \sum_{j\in\mathcal{N}_{i}}e_{ij} \bm{h}_j^{(k)} \bm{W}^{(k + 1)} \right),
	\label{eq:output}
\end{equation}
where \(g(\cdot)\) is the prediction function, \(\bm{W}^{(k + 1)} \in \mathbb{R}^{d_k \times |\mathcal{Y}|}\), and \(|\mathcal{Y}|\) is the number of classes. Then, the loss of node classification can be calculated as
\(\mathcal{L} = \frac{1}{n}\sum_{i = 1}^n \mathcal{L}_\text{clf}(\bm{z}_i, \bm{y}_i)\)
where \(\bm{y}_i\) is the true label for node \(i\) and \(\mathcal{L}_\text{clf}\) is the cross-entropy loss.

To obtain more accurate node embeddings \(\bm{h}_i^\text{agg}\) and \(\bm{\bar{h}}_i^\text{agg}\), we apply two auxiliary classifiers on \(\bm{h}_i^\text{agg}\) and \(\bm{\bar{h}}_i^\text{agg}\). Subsequently, the resulting representation \(\bm{h}_i^\text{ir}\) for the neighborhood interaction will be more precise as well. Then, as formulated in Eq. (\ref{eq:output}), we apply one additional graph convolutional layer on each of \(\bm{h}_i^\text{air}\), \(\bm{h}_i^\text{agg}\), and \(\bm{\bar{h}}_i^\text{agg}\) to attain \(\bm{z}_i^\text{air}\), \(\bm{z}_i^\text{agg}\), and \(\bm{\bar{z}}_i^\text{agg}\). Eventually, the overall objective function is the weighted sum of the three losses:
\begin{equation}
	\begin{aligned}
		\mathcal{L}_\text{total} &= \frac{1}{n}\sum_{i = 1}^n \left[ \lambda_1 \mathcal{L}_\text{clf}(\bm{z}_i^\text{air}, \bm{y}_i) + \lambda_2 \mathcal{L}_\text{clf}(\bm{z}_i^\text{agg}, \bm{y}_i) + \lambda_3 \mathcal{L}_\text{clf}(\bm{\bar{z}}_i^\text{agg}, \bm{y}_i) \right] \\
		&= \lambda_1 \mathcal{L}^\text{air} + \lambda_2 \mathcal{L}^\text{agg} + \lambda_3 \mathcal{\bar{L}}^\text{agg}, \\
	\end{aligned}
\end{equation}
where \(\lambda_1\), \(\lambda_2\), and \(\lambda_3\) are hyperparameters controlling weights of the three loss functions. For training, we minimize the total loss \(\mathcal{L}_\text{total}\), while for inference, we only use \(\bm{z}_i^\text{air}\), since \(\bm{z}_i^\text{agg}\) and \(\bm{\bar{z}}_i^\text{air}\) are to ensure \(\bm{h}_i^\text{air}\) is accurate enough.

\subsection{Model Architecture and Complexity Analysis}

We suppose there are \((K + 1)\) layers in the underlying graph convolutional model, where the last layer is employed for node classification. For GraphAIR, we employ two separate and symmetric branches, each of which consists of \(K\) graph convolutional layers to obtain \(\bm{h}_i^\text{agg}\) and \(\bm{\bar{h}}_i^\text{agg}\). Then, considering \(\bm{h}_i^\text{agg}\) and \(\bm{\bar{h}}_i^\text{agg}\) have aggregated enough information from neighborhoods, here we conduct the neighborhood interaction only once by multiplying \(\bm{h}_i^\text{agg}\) and \(\bm{\bar{h}}_i^\text{agg}\) for the sake of efficiency. Additionally, we employ three graph convolutional layers followed by softmax activation functions on \(\bm{h}_i^\text{air}\), \(\bm{h}_i^\text{agg}\), and \(\bm{\bar{h}}_i^\text{agg}\). In summary, there will be \((2K + 3)\) layers in GraphAIR.

\begin{figure*}
	\centering
	\includegraphics[width=0.75\linewidth]{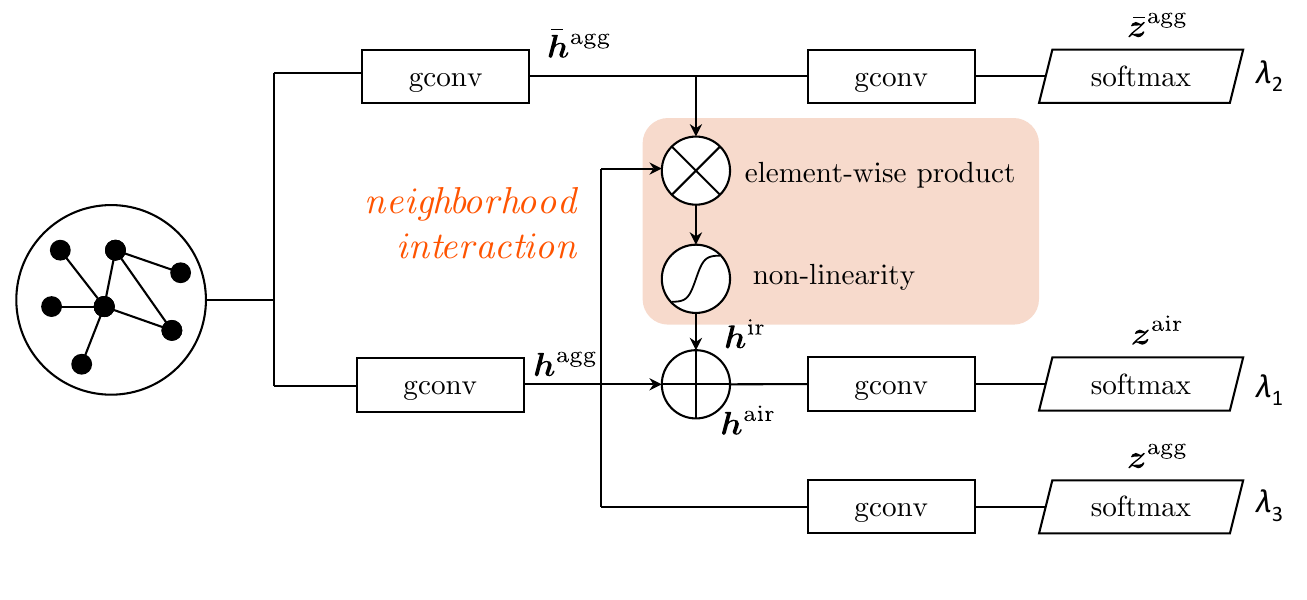}
	\caption{The proposed GraphAIR framework. The ``gconv'' block indicates the general graph convolutional layer, which can be instantiated as GCN layers, GAT layers, etc.}
	\label{fig:framework}
\end{figure*}

The proposed GraphAIR model with \(K = 1\) is illustrated in Figure \ref{fig:framework}.
Each layer in GraphAIR has the same space and time complexity as the underlying model and the additional computation cost of GraphAIR is mainly introduced by the multiplication process for the neighborhood interaction. For the neighborhood interaction in Eq. (\ref{eq:node-interaction}), the cost is \(O(nd)\) where \(d\) is the embedding dimension. For each layer of the existing graph convolutional model such as GCN and GAT, it takes \(O(n^2d)\) time to proceed Eq. (\ref{eq:summarization-aggregator}). Therefore, the additional computation cost of neighborhood interaction is insignificant. That is to say, our proposed approach is as asymptotically efficient as the underlying graph convolutional model.

\section{Evaluation}
We extensively evaluate our proposed GraphAIR model on the node classification task and link prediction using five public datasets. Besides, we also conduct ablation studies on the neighborhood interaction module. For readers of interest, we include comparison of training time and all details of the experimental configurations in the supplementary material.


\subsection{Datasets}

We use five widely-used datasets to evaluate model performance on both transductive learning and inductive learning scenarios. Specifically, three citation networks (Cora, Citeseer, Pubmed) are used for tranductive node classification and link prediction, one knowledge graph (NELL) is used for transductive node classification, and one multi-graph molecular network (PPI) is for inductive node classification. We exactly follow the setup in \cite{Yang:2016ts,Kipf:2016tc,Kipf:2016ul,Velickovic:2018we}. The statistics of datasets used throughout the experiments are summarized in Table \ref{tab:datasets}.

\textbf{Citation networks.}
We build undirected citation networks from three datasets, where documents and citations are treated as nodes and edges respectively. We treat the bag-of-words of each document as the feature vector. Our goal is to predict the class of each document. Only twenty labels per class are used for training.

\textbf{Knowledge graphs.}
The dataset collected from the knowledge base of Never Ending Language Learning (NELL) contains entities, relations, and text description. For every triplet \((e_1, r, e_2)\), where \(e_1\) and \(e_2\) are entities and \(r\) is the relationship between them, \(r\) will be assigned with two separate nodes \(r_1\) and \(r_2\). Then, we add two edges between \((e_1, r_1)\) and \((e_2, r_2)\). For the knowledge graph, we conduct the entity classification. Similarly, we use bag-of-words as feature vectors. Only one label per class is used for training.

\textbf{Molecular networks.}
We use the PPI (protein-protein interaction) network that consists of twenty-four (24) graphs corresponding to different human tissues. Each node contains fifty (50) features composed of positional gene sets, motif gene sets, and immunological signatures. We select twenty (20) graphs as the training set, two (2) for validation, and two (2) for testing.

\begin{table*}
	\centering
	\caption{Dataset statistics.}
	\begin{tabular}{cccccccccc}
		\toprule
		Dataset & Cora & Citeseer & Pubmed & NELL & PPI \\
		\midrule
		Task  & \multicolumn{4}{c}{Transductive} & Inductive \\
		Type  & \multicolumn{3}{c}{Citation network} & Knowledge graph & Molecular \\
		\# Vertices & 2,708 & 3,327 & 19,717 & 65,755 & 56,944 \\
		\# Edges & 5,429 & 4,732 & 44,338 & 266,144 & 818,716 \\
		\# Classes & 7     & 6     & 3     & 210   & 121 \\
		\# Features & 1,433 & 3,703 & 500   & 5,414 & 50 \\
		\# Training nodes & 140   & 120   & 60    & 210   & 44,906 \\
		\# Test nodes & 1,000 & 1,000 & 1,000 & 1,000 & 5,524 \\
		\# Validation nodes & 500   & 500   & 500   & 500  & 6,514 \\
		\bottomrule
	\end{tabular}
	\label{tab:datasets}
\end{table*}

\subsection{Experiments on Node Classification}

\subsubsection{Baseline Methods}
We comprehensively compare our method with various traditional random-walk-based algorithms and state-of-the-art GCN-based methods. We closely follow the experimental setting of previous work; the performance of those baselines is reported as in their original papers.\footnote{In experiments, we found that the results reported in \cite{Hamilton:2017tp} after ten epochs did not converge to the best values. For a fair comparison with other models, we reuse its official implementation and report the results of the baselines after 200 epochs.}

\textbf{Transductive node classification.}
In the transductive setting, the baselines include skip-gram-based network embedding method DeepWalk \cite{Perozzi:2014ib}, graph convolutional networks with higher-order Chebyshev filters (Planetoid) \cite{Yang:2016ts}, graph convolution with one-hop neighbors (GCN) \cite{Kipf:2016tc}, and graph attention networks (GAT) \cite{Velickovic:2018we}. In addition, we further compare the performance of the proposed model with the recently proposed simplified graph convolutional networks (SGCs) \cite{Wu:2019vz} which removes redundant non-linear activations. Also, we modify graph isomorphic networks (GINs) \cite{Xu:2019ty} which utilize non-linear MLPs as the aggregation function for the node classification task. Note that since GIN was originally proposed for graph classification, we apply two GIN convolutional layers and remove the graph-level readout function for the transductive node classification task.
Moreover, we include two recent methods Graph Markov Neural Networks (GMNN) \cite{Qu:2019wz} and Deeper Graph Convolutional Networks (DAGNN) \cite{Liu:2020dt}.

\textbf{Inductive node classification.}
For inductive node classification, we mainly compare GraphAIR with inductive graph convolutional networks (GraphSAGE) \cite{Hamilton:2017tp} and graph attention networks (GAT) \cite{Velickovic:2018we}. Note that GraphSAGE provides several variants of neighborhood aggregators: SAGE-mean concatenates the features of the neighborhoods and the central node, SAGE-GCN takes the average over neighborhood feature vectors, SAGE-LSTM combines neighborhood features by using a LSTM model, and SAGE-pool uses an element-wise max-pooling operator to aggregate the neighborhood information nonlinearly.


\subsubsection{Experimental Configurations}
We employ our GraphAIR framework on top of three representative models, including GCN, GraphSAGE, and GAT, which is denoted by AIR-GCN, AIR-SAGE, and AIR-GAT, respectively. Particularly, while GraphSAGE proposes several variants for neighborhood aggregation, among them only SAGE-GCN satisfies the coefficient normalization in Eq. (\ref{eq:node-interaction}). Therefore, we select SAGE-GCN as the base model for GraphAIR. For a fair comparison, we closely follow the same hyper-parameters setting as the underlying graph convolutional model, such as learning rate, dropout rate, weight decay factor, hidden dimensions, etc. Considering GIN is originally proposed for graph-level classification, the hidden dimensions are set to the same as GCN. In the experiment, we only tune the weights of three loss functions by grid search, where $\lambda_i \in [0.1, 0.2, \dots, 1.5], \forall i \in \{1,2,3\}$. For the transductive setting, we use the features of all data but only the labels of the training set are used for training. For the inductive setting, we train our model without the validation data and testing data. In addition, we report the average accuracy of 20 measurements.

\subsubsection{Results and Analysis}

\textbf{Transductive.}
We summarize the results of transductive node classification in Table \ref{tab:transductive-results}. Note that even we apply the sparse version implementation of GAT, it requires more than 64G memory on NELL dataset. Thus, the performance of GAT and AIR-GAT is not reported. From the tables, it is seen that GraphAIR achieves state-of-the-art performance over all datasets, which demonstrates the effectiveness of the proposed GraphAIR framework. SGC acquires comparable results to that of GCN, which corresponds to our conclusion in Proposition 1 that existing GCNs are not able to learn the nonlinearity of graph data sufficiently. For our proposed AIR-GCN, it outperforms its base model GCN by margins of 3.2\%, 2.6\%, 1.0\%, and 2.5\%. The same trends hold for AIR-GAT with its base model GAT as well. To sum up, the improvements demonstrate the effectiveness of modeling the non-linear distributions of nodes.

In addition, another important observation is that, both AIR-GAT and AIR-GCN outperform the complex non-linear opponents such as GIN. Although MLPs are able to asymptotically approximate any complicated and non-linear functions theoretically, they tend to converge to undesired local minima in practice \cite{Rojas:1996wf}.
The experimental results prove the rationality of \emph{explicitly} introducing neighborhood interaction.

\begin{table*}
	\centering
	\caption{Accuracy of transductive node classification with the best performance highlighted in bold.}
	\begin{tabular}{ccccc}
		\toprule
		Method & Cora & Citeseer & Pubmed & NELL \\
		\midrule
		DeepWalk & 67.2\% & 43.2\% & 65.3\% & 58.1\% \\
		Planetoid & 75.7\% & 64.7\% & 77.2\% & 61.9\% \\
		GIN+0 & 78.3\% & 62.9\% & 78.0\% & 65.5\% \\
		GIN+\(\epsilon\) & 76.6\% & 63.8\% & 75.5\% & 63.5\% \\
		GCN   & 81.5\% & 70.3\% & 79.0\% & 66.0\% \\
		SGC   & 81.0\% \(\pm\) 0.0\% & 71.9\% \(\pm\) 0.1\% & 78.9\% \(\pm\) 0.0\% & 65.4\% \(\pm\) 0.2\% \\
		GAT   & 83.0\% \(\pm\) 0.7\% & 72.5\% \(\pm\) 0.7\% & 79.0\% \(\pm\) 0.3\% & -- \\
		GMNN  & 83.7\% & 72.9\% & \textbf{81.8\%} & 68.0\% \\
		DAGNN & 84.4\% \(\pm\) 0.5\% & 73.3\% \(\pm\) 0.6\% & 80.5\% \(\pm\) 0.5\% & 67.5\% \(\pm\) 0.3 \% \\
		\midrule
		AIR-GCN & \textbf{84.7\% \(\pm\) 0.1\%} & \textbf{72.9\% \(\pm\) 0.1\%} & 80.0\% \(\pm\) 0.1\% & \textbf{68.5\% \(\pm\) 0.2\%} \\
		AIR-GAT & \textbf{84.5\% \(\pm\) 0.7\%} & \textbf{73.5\% \(\pm\) 0.6\%} & 80.0\% \(\pm\) 0.2\% & -- \\
		\bottomrule
	\end{tabular}
	\label{tab:transductive-results}
\end{table*}

\begin{table}
	\centering
	\caption{Accuracy of inductive node classification with the best performance highlighted in bold.}
	\begin{tabular}{cc}
		\toprule
		Method & PPI \\
		\midrule
		Random & 39.6\% \\
		SAGE-GCN & 55.6\% \\
		SAGE-mean & 64.5\% \\
		SAGE-LSTM & 66.8\% \\
		SAGE-pool & 73.8\% \\
		GAT & 97.3\% \(\pm\) 0.2\% \\
		\midrule
		AIR-SAGE-GCN & 57.8\% \(\pm\) 0.1\%\\
		AIR-GAT & \textbf{98.6\% \(\pm\) 0.2\%} \\
		\bottomrule
	\end{tabular}
	\label{tab:inductive-results}
\end{table}

\textbf{Inductive.}
The results of inductive learning are shown in Table \ref{tab:inductive-results}. AIR-SAGE-GCN outperforms its base model SAGE-GCN by 2.2\%. Besides, we can clearly observe that AIR-GAT achieves the best performance. It is worth noting that the previous state-of-the-art method has already reached pretty high performance and the proposed AIR-GAT still acquires the improvement of 1.3\% over the vanilla GAT. Besides, it is suggested that the proposed GraphAIR framework is also generalizable for multiple graphs.

\subsection{Experiments on Link Prediction}

In order to further verify our proposed framework is general for other graph representation learning tasks, we conduct experiments on link prediction additionally. We choose citation networks as benchmark datasets and compare against various state-of-the-art methods, including graph autoencoders (GAE) \cite{Kipf:2016ul} and variational graph autoencoders (VGAE) \cite{Kipf:2016ul}, as well as other baseline algorithms, including SC \cite{Tang:2011cx} and DeepWalk \cite{Perozzi:2014ib}. We employ our GraphAIR framework on the basis of GAE, which constructs the graph autoencoder with GCNs. The resulting model is denoted by AIR-GAE.

We report the performance in terms of area under the ROC curve (AUC) based on the performance of 20 runs. The mean performance and standard error are presented in Table \ref{tab:link-prediction}. It is shown from the table that the proposed AIR-GAE outperforms its vanilla opponents GAE and VGAE, which once again verifies the necessity to incorporate the neighborhood interaction to neighborhood aggregation. Please note that previous state-of-the-art methods have already obtained high enough performance on the Pubmed dataset and our method AIR-GAE pushes the boundary with absolute improvements of 2.8\%, achieving 99.2\% in terms of AUC. Also, it can be observed that the proposed method obtains much more obvious improvements, compared with the performance of node classification. We suspect that this is primarily because models for the link prediction task usually employ pairwise decoders for calculating the probability of the link between two nodes. For example, GAE and VGAE assume the probability that there exists an edge between two nodes is proportional to the dot product of the embeddings of these two nodes. Therefore, our approach, which explicitly models the neighborhood interaction through the multiplication of the embeddings of two nodes, is inherently related to the link prediction task and obtains more improvements.

\begin{table}
	\small
	\centering
	\caption{AUC of link prediction in citation networks with the best performance highlighted in bold.}
	\label{tab:link-prediction}
	\resizebox{\linewidth}{!}{
	\begin{tabular}{ccccccc}
	\toprule
	Method   & Cora  & Citeseer & Pubmed \\
	\midrule
	SC & 84.6\% \(\pm{}\) 0.01\% & 80.5\% \(\pm{}\) 0.01\% & 84.2\% \(\pm{}\) 0.02\% \\
	DeepWalk & 83.1\% \(\pm{}\) 0.01\% & 80.5\% \(\pm{}\) 0.02\% & 84.4\% \(\pm{}\) 0.00\% \\
	\midrule
	GAE & 91.0\% \(\pm{}\) 0.02\% & 89.5\% \(\pm{}\) 0.04\% & 96.4\% \(\pm{}\) 0.00\% \\
	VGAE & 91.4\% \(\pm{}\) 0.01\% & 90.8\% \(\pm{}\) 0.02\% & 94.4\% \(\pm{}\) 0.02\% \\
	\midrule
	AIR-GAE & \textbf{95.4\% \(\pm{}\) 0.01\%} & \textbf{95.0\% \(\pm{}\) 0.01\%}  & \textbf{99.2\% \(\pm{}\) 0.02\%}  \\
	\bottomrule
	\end{tabular}
	}
\end{table}

\subsection{Ablation Studies on the Neighborhood Interaction Module}

As we analyzed in Section 3.3, the number of parameters in GraphAIR is almost two times than that of the underlying graph convolutional model. In this section, we conduct ablation studies to answer the following questions:
\begin{itemize}
	\item \textbf{Q1}: How much improvement has the proposed neighborhood interaction module brought?
	\item \textbf{Q2}: Does the disentangled residual learning strategy bring sufficient improvements?
\end{itemize}

To answer Q1 and verify the effectiveness of GraphAIR is introduced by the proposed neighborhood interaction module rather than the larger number of parameters in the model, we remove the neighborhood interaction module of AIR-GCN. Then, the resulting model has exactly the same parameters as AIR-GCN. As there are almost double parameters than vanilla GCN in the resulting model, we denote the resulting model as DP-GCN (Double-Parameter GCN).

To answer Q2, we employ only one branch of graph convolutional networks consisting \((K+1)\) layers to produce the output representations. To obtain neighborhood interaction \(\bm{h}^\text{ir}\), we directly make use of the self-interaction strategy described in Eq. (9) instead of Eq. (11). The resulting model is termed as self-IR-GCN.

\begin{figure}
	\centering
	\includegraphics[width=0.8\linewidth]{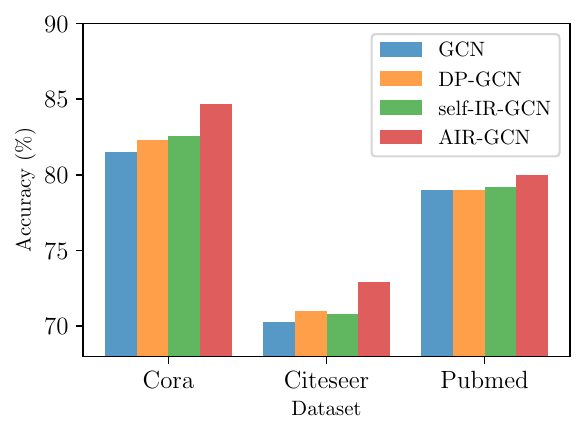}
	\caption{Accuracy of node classification in the ablation study.}
	\label{fig:ablation-study}
\end{figure}

For fair comparison, other experimental configurations are kept the same as AIR-GCN. The results of node classification are presented in Figure \ref{fig:ablation-study}. It is seen from the figure that the proposed AIR-GCN evidently achieves the best performance and outperforms DP-GCN and self-IR-GCN. For Q1, we can observe that DP-GCN only obtains slightly better accuracy on Cora and Citeseer and almost the same performance as the vanilla GCN on Pubmed. It can be verified that the neighborhood interaction module mainly contributes to the performance improvement of the proposed AIR-GCN model. For Q2, it is seen that the performance of self-IR-GCN only gets slightly improved on three datasets, which demonstrates the rationality of modeling neighborhood interaction. However, disengaging the neighborhood interaction from neighborhood aggregation can bring more improvements.

\begin{table*}
	\centering
	\caption{Accuracy of node classification with different ways of combining node embeddings resulting from neighborhood aggregation and interaction.}
	\label{tab:combine-neighborhood-interaction}
	\begin{tabular}{ccccc}
		\toprule
		Method & Cora & Citeseer & Pubmed & NELL \\
		\midrule
		AIR-GCN & \textbf{84.7\% \(\pm\) 0.1\%} & \textbf{72.9\% \(\pm\) 0.1\%} & 80.0\% \(\pm\) 0.1\% & \textbf{68.5\% \(\pm\) 0.2\%} \\
		AIR-GAT & \textbf{84.5\% \(\pm\) 0.7\%} & \textbf{73.5\% \(\pm\) 0.6\%} & 80.0\% \(\pm\) 0.2\% & -- \\
		\midrule
		AIR-GCN-concat & 83.1\% \(\pm\) 0.1\% & 70.6\% \(\pm\) 0.1\% & 79.2\% \(\pm\) 0.1\% & 66.8\% \(\pm\) 0.1\% \\
		AIR-GAT-concat & 83.3\% \(\pm\) 0.8\% & 71.2\% \(\pm\) 0.6\% & 79.4\% \(\pm\) 0.3\% & -- \\
		\bottomrule
	\end{tabular}
\end{table*}

\subsection{Discussions on Combining Neighborhood Aggregation and Interaction}
\label{sec:combining-aggregation-interaction}

In this section, we also examine different ways of combining node embeddings resulting from neighborhood aggregation and interaction. Apart from residual connections \cite{He:2016ib}, we also conduct experiments that combine node embeddings from neighborhood aggregation and interaction using concatenation, where the models are denoted as AIR-GCN-concat and AIR-GAT-concat respectively. The results are summarized in Table \ref{tab:combine-neighborhood-interaction}. It is seen that although using concatenation introduces no information loss, it may cause performance loss due to overfitting.

\subsection{Parameter Sensitivity Analysis}

\begin{figure}
	\centering
	\includegraphics[width=\linewidth]{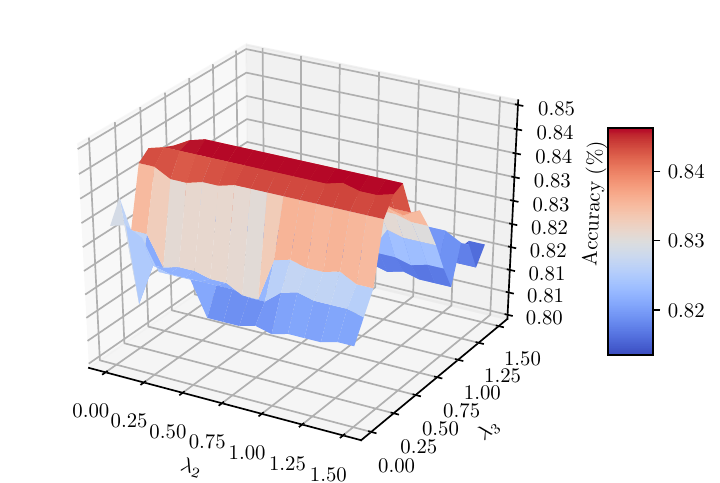}
	\caption{Accuracy of node classification on Cora with varying hyperparameters.}
	\label{fig:sensitivity}
\end{figure}

In this section, we attempt to investigate how the hyperparameters of loss functions impact the model performance. To this end, we conduct experiments on the Cora dataset with different combinations of parameters \((\lambda_2, \lambda_3)\), where \(\lambda_2, \lambda_3 \in \{0, 0.1, 0.2, \dots ,1.5\}\). We report the node classification performance in terms of accuracy and the corresponding results are shown in Figure \ref{fig:sensitivity}.
For other hyperparameters, we set \(\lambda_1 = 1.1\) according to the performance on most datasets, which is also consistent with our previous experiments reported in Section 5.2 and Section 5.3. It can be seen from Figure \ref{fig:sensitivity} that, when \(\lambda_3\) equals to 0.4 or 0.5, the performance of the model stays stable. In other words, the hyperparameter \(\lambda_2\) can be selected in a wide range, which means that the selection of hyperparameters in GraphAIR model does not impact the performance too much. Nevertheless, too small or too large hyperparameters \((\lambda_2, \lambda_3)\) greatly attenuate or accentuate the contribution of one branch, and thus lead to downgraded performance.

\section{Conclusion}
In this paper, we have firstly proved that existing mainstream GCN-based models have difficulty in well capturing the complicated non-linearity of graph data. Then, in order to better capture the complicated and non-linear distributions of nodes, we have proposed a novel GraphAIR framework that explicitly models the neighborhood interaction in addition to the neighborhood aggregation scheme. By employing residual learning strategy, we disentangle learning the neighborhood interaction from the neighborhood aggregation, which makes the optimization easier. The proposed GraphAIR is compatible with most existing graph convolutional models and it can provide a plug-and-play module for the neighborhood interaction. Finally, GraphAIR based on well-known models including GCN, GraphSAGE, and GAT have been thoroughly investigated through empirical evaluation. Extensive experiments on benchmark tasks including node classification and link prediction demonstrate the effectiveness of our model.

\section*{Acknowledgements}
This work is supported by National Natural Science Foundation of China (61772528) and National Key Research and Development Program (2018YFB1402600, 2016YFB1001000).

\bibliographystyle{IEEEtran}
\bibliography{pr}

\begin{IEEEbiography}[{\includegraphics[width=1in,height=1.25in,clip,keepaspectratio]{./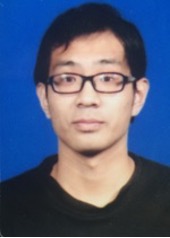}}]{Fenyu Hu}
received the B.S. degree in School of Electronical Engineering from Beijing University of Post and Communications, in 2017. He is currently pursuing Ph.D. degree in Center for Research on Intelligent Perception and Computing (CRIPAC) at National Laboratory of Pattern Recognition (NLPR), Institute of Automation, Chinese Academy of Sciences (CASIA), Beijing, China. His research interests include data mining, machine learning, recommender systems, and information retrieval.
\end{IEEEbiography}

\begin{IEEEbiography}[{\includegraphics[width=1in,height=1.25in,clip,keepaspectratio]{./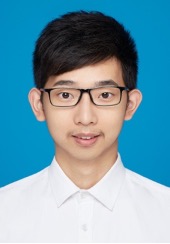}}]{Yanqiao Zhu}
is currently pursuing his master’s degree of Computer Science at Center for Research on Intelligent Perception and Computing (CRIPAC) at National Laboratory of Pattern Recognition (NLPR), Institute of Automation, Chinese Academy of Sciences (CASIA). Previously, he obtained his Bachelor of Engineering degree of Software Engineering from Tongji University in 2019. His current research interests mainly lie in the fields of machine learning (with an emphasis on graph representation learning) and its application to recommender systems.
\end{IEEEbiography}

\begin{IEEEbiography}[{\includegraphics[width=1in,height=1.25in,clip,keepaspectratio]{./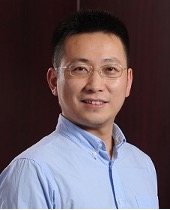}}]{Shu Wu}
received his B.S. degree from Hunan University, China, in 2004, M.S. degree from Xiamen University, China, in 2007, and Ph.D. degree from Department of Computer Science, University of Sherbrooke, Quebec, Canada, all in computer science. He is an Associate Professor with Center for Research on Intelligent Perception and Computing (CRIPAC) at National Laboratory of Pattern Recognition (NLPR), Institute of Automation, Chinese Academy of Sciences (CASIA). He has published more than 20 papers in the areas of data mining and information retrieval in international journals and conferences, such as IEEE TKDE, IEEE THMS, AAAI, ICDM, SIGIR, and CIKM. His research interests include data mining, information retrieval, and recommendation systems.
\end{IEEEbiography}

\begin{IEEEbiography}[{\includegraphics[width=1in,height=1.25in,clip,keepaspectratio]{./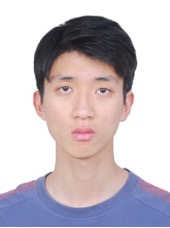}}]{Weiran Huang}
is currently a first-year Ph.D. student at The Chinese University of Hong Kong. Before that, he received his Bachelor of Engineering degree of Computer Science from Beijing University of Posts and Telecommunications. From May 2018 to May 2019, he was a research intern under the supervision of Dr. Shu Wu in Center for Research on Intelligent Perception and Computing (CRIPAC) at National Laboratory of Pattern Recognition (NLPR), Institute of Automation, Chinese Academy of Sciences (CASIA), Beijing China. His research interests include machine learning systems, data mining, and information retrieval.
\end{IEEEbiography}

\begin{IEEEbiography}[{\includegraphics[width=1in,height=1.25in,clip,keepaspectratio]{./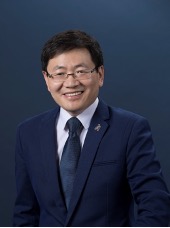}}]{Liang Wang}
received both the B.S. and M.S. degrees from Anhui University in 1997 and 2000, respectively, and the Ph.D. degree from the Institute of Automation, Chinese Academy of Sciences (CASIA) in 2004. From 2004 to 2010, he was a research assistant at Imperial College London, United Kingdom, and Monash University, Australia, a research fellow at the University of Melbourne, Australia, and a lecturer at the University of Bath, United Kingdom, respectively. Currently, he is a full professor of the Hundred Talents Program at the National Lab of Pattern Recognition, CASIA. His major research interests include machine learning, pattern recognition, and computer vision. He has widely published in highly ranked international journals such as IEEE TPAMI and IEEE TIP and leading international conferences such as CVPR, ICCV, and ICDM. He is an IEEE Fellow and an IAPR Fellow.
\end{IEEEbiography}

\begin{IEEEbiography}[{\includegraphics[width=1in,height=1.25in,clip,keepaspectratio]{./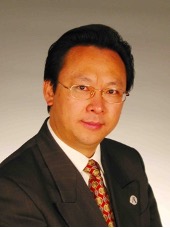}}]{Tieniu Tan}
received his BSc degree in electronic engineering from Xi'an Jiaotong University, China, in 1984, and his MSc and PhD degrees in electronic engineering from Imperial College London, U.K., in 1986 and 1989, respectively. In October 1989, he joined the Computational Vision Group at the Department of Computer Science, The University of Reading, Reading, U.K., where he worked as a Research Fellow, Senior Research Fellow and Lecturer. In January 1998, he returned to China to join the National Laboratory of Pattern Recognition (NLPR), Institute of Automation of the Chinese Academy of Sciences (CAS), Beijing, China, where he is currently Professor and former director (1998—2013) of the NLPR and Center for Research on Intelligent Perception and Computing (CRIPAC), and was Director General of the Institute (2000—2007). He was also Vice President of the Chinese Academy of Sciences (2015—2016). His current research interests include biometrics, image and video understanding, and information content security. Dr. Tan is a Fellow of CAS, TWAS (The World Academy of Sciences for the advancement of science in developing countries), IEEE and IAPR, and an International Fellow of the UK Royal Academy of Engineering. He is or has served as Associate Editor or member of editorial boards of many leading international journals including IEEE Transactions on Pattern Analysis and Machine Intelligence (PAMI), IEEE Transactions on Automation Science and Engineering, IEEE Transactions on Information Forensics and Security, IEEE Transactions on Circuits and Systems for Video Technology, Pattern Recognition, Pattern Recognition Letters, Image and Vision Computing, etc.
\end{IEEEbiography}

\end{document}